\definecolor{darkblue}{rgb}{0.0,0.0,0.5}
\newtheorem{theorem}{Theorem}
\newtheorem{prop}{Proposition}
\theoremstyle{definition}
\newtheorem{rmk}{Remark}
\newtheorem{defn}{Definition}
\newcommand{\dpp}{\textsc{Dpp}\xspace}
\newcommand{\dpps}{\textsc{Dpp}s\xspace}
\newcommand{\lowrank}{\textsc{LowRank}\xspace}
\newcommand{\dynCE}{\textsc{Dyn}\xspace}
\newcommand{\prodCE}{\textsc{Prod}\xspace}
\newcommand{\prodNCE}{\textsc{NCE}\xspace}
\newcommand{\expCE}{\textsc{Exp}\xspace}
\def\argmax{\mathop{\rm argmax}}
\def\0{{\bm 0}}
\def\b{{\bm b}}
\def\v{{\bm v}}
\def\B{{\bm B}}
\def\C{{\bm C}}
\def\I{{\bm I}}
\def\L{{\bm L}}
\def\P{{\bm P}}
\def\U{{\bm U}}
\def\V{{\bm V}}
\def\X{{\bm X}}
\def\Z{{\bm Z}}
\def\Lambda{\boldsymbol{\lambda}}
\def\Acal{\mathcal{A}}
\def\Pcal{\mathcal{P}}
\def\Tcal{\mathcal{T}}
\def\Ycal{\mathcal{Y}}
\def\Rbb{\mathbb{R}}
\def\wo{\backslash}
\title{Learning Determinantal Point Processes by \\Corrective Negative Sampling}
\author{\name Zelda Mariet$^*$ \email{zelda@csail.mit.edu}\\
  \name Mike Gartrell$^\dagger$ \email{m.gartrell@criteo.com}\\
  \name Suvrit Sra$^*$ \email{suvrit@mit.edu}\\
  \addr{$*$ Massachusetts Institute of Technology}\\
  \addr{$\dagger$ Criteo AI Lab}
}
\begin{document}
\maketitle

\begin{abstract}
  Determinantal Point Processes (\dpps) have attracted significant interest from
  the machine-learning community due to their ability to elegantly and tractably
  model the delicate balance between quality and diversity of sets.  \dpps are
  commonly learned from data using maximum likelihood estimation (MLE).  While
  fitting observed sets well, MLE for \dpps may also assign high likelihoods to
  unobserved sets that are far from the true generative distribution of the
  data. To address this issue, which reduces the quality of the learned model,
  we introduce a novel optimization problem, \emph{Contrastive Estimation (CE)},
  which encodes information about ``negative'' samples into the basic learning
  model. CE is grounded in the successful use of negative information in
  machine-vision and language modeling. Depending on the chosen negative
  distribution (which may be static or evolve during optimization), CE assumes
  two different forms, which we analyze theoretically and experimentally. We
  evaluate our new model on real-world datasets; on a challenging dataset, CE
  learning delivers a considerable improvement in predictive performance over a
  \dpp learned without using contrastive information.
\end{abstract}


\vskip 1em
\section{Introduction}
Careful selection of items from a large collection underlies many machine learning applications. Notable examples include recommender systems, information retrieval and automatic summarization methods, among others. Typically, the selected set of items must fulfill a variety of application specific requirements---e.g., when recommending items to a user, the \emph{quality} of each selected item is important. This quality must be, however, balanced by the \emph{diversity} of the selected items to avoid redundancy within recommendations.

But balancing quality with diversity is challenging: as the collection size grows, the number of its subsets grows exponentially. A model that offers an elegant, tractable way to achieve this balance is a Determinantal Point Process (\dpp). Concretely, a \dpp models a distribution over subsets of a ground set $\Ycal$ that is parametrized by a semi-definite matrix $\L \in \Rbb^{|\Ycal| \times |\Ycal|}$, such that for any $A \subseteq \mathcal Y$,
\begin{equation}
  \label{eq:dpp}
  \Pr(A) \propto \det(\L_A),
\end{equation}
where $\L_A=[\L_{ij}]_{i,j\in A}$ is the submatrix of $\L$ indexed by $A$. Informally, $\det(\L_A)$ represents the volume associated with subset $A$, the diagonal entry $L_{ii}$ represents the importance of item $i$, while entry $L_{ij} = L_{ji}$ encodes similarity between items $i$ and $j$. Since the normalization constant of~\eqref{eq:dpp} is simply $\sum_{A\subseteq \Ycal}\det(\L_A)=\det(\L+\I)$, we have $\Pr(A)=\det(\L_A)/\det(\L+\I)$, which suggests why \dpps may be tractable despite their exponentially large sample space.

The key object defining a \dpp is its kernel matrix $\L$. This matrix may be
fixed \emph{a priori} using domain knowledge~\citep{borodin2009}, or as is more
common in machine learning applications, learned from observations using maximum
likelihood estimation (MLE)~\citep{gillenwater14,mariet15}. However, while
fitting observed subsets well, MLE for \dpps may also assign high likelihoods to
unobserved subsets far from the underlying generative
distribution~\citep{chao15}, since MLE causes the \dpp model to maximize the
determinantal volume of observed subsets without explicitly minimizing the
volume of unobserved subsets.  Therefore, the volume of unobserved subsets may
be larger than expected, and MLE-based \dpp models may thus have modes
corresponding to subsets that are close in likelihood, yet differ in how close
they are to the true data distribution. Such confusable modes reduce the quality
of the learned model, hurting predictions, as shown in Figure~\ref{fig:motivation}.  

Such concerns when learning generative models over huge sample spaces are not limited to the area of subset-selection: applications in image and text generation have been the driving force in developing techniques for generating high-quality samples. Among their innovations, a particularly successful technique uses generated samples as ``negative samples'' to train a discriminator, which in turn encourages generation of more realistic samples; this is the key idea behind the Generative Adversarial Nets (GANs) introduced in~\citep{goodfellow14}. 

These observations motivate us to investigate the use of \dpp-generated samples with added perturbations as \emph{negatives}, which we then incorporate into the learning task to improve the modeling power of \dpps. Intuitively, negative samples are those subsets that are far from the true data distribution, but to which the \dpp erroneously assigns high probability. As there is no closed form way to generate such idealized negatives, we approximate them via an external ``negative distribution''.

More precisely, we introduce a novel \dpp learning problem that incorporates
samples from a negative distribution into traditional MLE.  Our approach reduces
the confusable mode issue associated with MLE for \dpps by augmenting MLE with a
term that explicitly minimizes the volume of unobserved subsets that are far
from the true data distribution (negative samples). While the focus of our work
is on generating the negative distribution \emph{jointly} with $\L$, we also
investigate outside sources of negative information. Ultimately, our formulation
leads to an optimization problem harder than the original \dpp learning problem;
we show that even approximate solutions greatly improve the performance of the
\dpp model when evaluated on concrete tasks, such as identifying the best item
to add to a subset of chosen objects (\emph{basket-completion}) and
discriminating between held-out test data and randomly generated subsets.

\begin{figure*}[t]
  \centering
  \begin{subfigure}{.49\textwidth}
    \centering
    \includegraphics{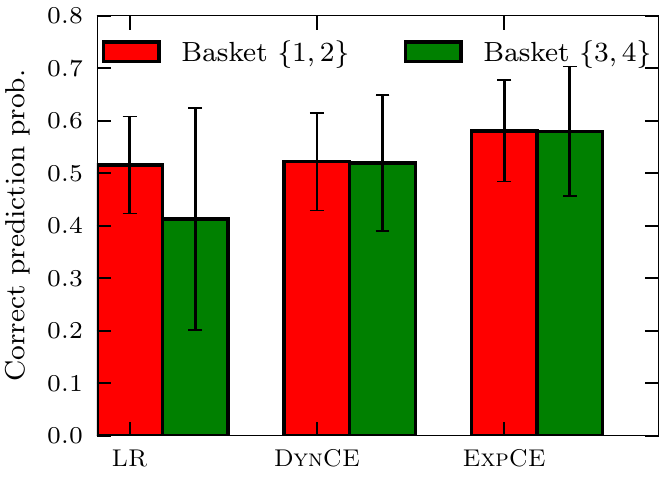}
    \caption{Probabilities of making the right prediction}
  \end{subfigure}
  \begin{subfigure}{.49\textwidth}
    \centering
    \includegraphics{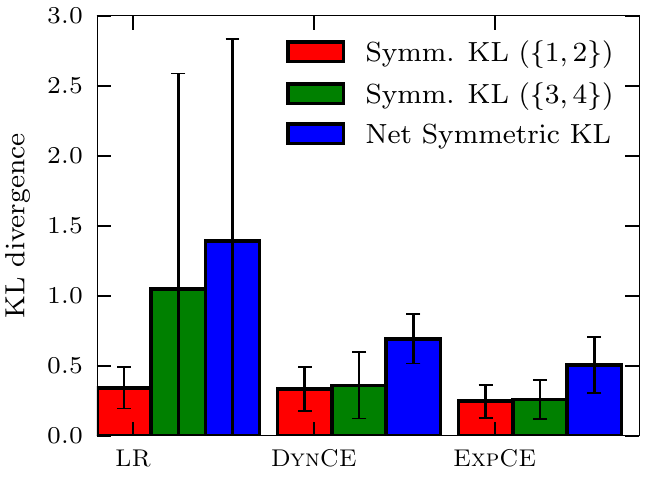}
    \caption{KL divergences}
  \end{subfigure}
\caption{Results for experiments on a synthetic toy dataset. This toy dataset
was generated by replicating the baskets \{1, 2\} and \{3, 4\} 1000 times each.
We randomly select 80\% of this dataset for training, and 20\% for test. We
train each model to convergence, and then compute the next-item predictive
probabilities for each unique pair, along with the symmetric KL divergence
(over areas of shared support) between the predictive and empirical next-item distributions. Net symmetric KL divergence is computed by adding the symmetric
KL divergences for each of the two unique baskets.  Experiments were run 10
times, with  $|\mathcal A^+|/|\mathcal A^-|$ set to the optimal value for each
model; $\alpha$ is set to its optimal \lowrank value.  See
Section~\ref{sec:learning-with-negatives} for details on the \dynCE and \expCE
negative sampling models. The LR (low-rank \dpp) model
assigns relatively high probabilities to modes that represent
incorrect predictions: as Maximum Likelihood Estimation learning teaches the model to maximize the volume of observed subsets without explicitly minimizing the volume of unobserved subsets, volumes of unobserved subsets may be larger than expected. The \dynCE and \expCE methods we introduce in Section~\ref{sec:learning-with-negatives} reduce this confusable
mode issue, resulting in predictive distributions that are closer to the true
distribution and much smaller variances.}
\label{fig:motivation}
\end{figure*}

\paragraph{Contributions.} To our knowledge, this work is the first theoretical or empirical investigation of augmenting the \dpp learning problem with negative information.
\begin{itemize}
\item[--] Our first main contribution is the Contrastive Estimation (CE) model, which incorporates negative information through \emph{inferred negatives} into the learning task. 
\item[--] We introduce static and dynamic models for CE and discuss the theoretical and practical trade-offs of such choices.  Static models leverage information that does not evolve over time, whereas dynamic models draw samples from a negative distribution that depends on the current model's parameters; dynamic CE posits an optimization problem worthy of independent study.
\item[--] We show how to learn CE models efficiently, and furthermore show that the complexity of conditioning a \dpp on a chosen sample can be brought from $\mathcal O(|\Ycal|^2)$ to essentially $\mathcal O(|\Ycal|)$. This helps dynamic CE and removes a major bottleneck in computing next-item predictions for a set.
\end{itemize}
Using findings obtained from extensive experiments conducted on small datasets,
we show on a large dataset that CE learning significantly improves the modeling
power of \dpps: CE learning improves \dpp performance for next-item basket
completion, as well as \dpp discriminative power, as evaluated by the model's
ability to distinguish held-out test data from randomly generated subsets. 

We present a review of related work in Section~\ref{sec:related-work}. In Section~\ref{sec:learning-with-negatives}, we introduce Contrastive Estimation and its dynamic and static variants. We discuss how the CE problem can be optimized efficiently in Section~\ref{sec:efficient}, as well as how \dpp conditioning for basket-completion predictions can be performed with improved complexity. In Section~\ref{sec:experiments}, we show that  CE learning leads to remarkable empirical improvements of \dpp performance metrics.

\section{Background and related work}
\label{sec:related-work}

First introduced to model fermion behavior by~\citet{macchi75}, \dpps have
gained popularity due to their elegant balancing of quality and subset
diversity. \dpps are studied both for their theoretical
properties~\citep{kuleszaBook,borodin2009,affandi14,kuleszaThesis,gillenwater-thesis,decreuse,lavancier15}, which include fast sampling~\citep{pmlr-v40-Rebeschini15,pmlr-v48-lih16,pmlr-v49-anari16},
and for their machine learning applications: object retrieval~\citep{affandi14},
summarization~\citep{lin12,chao15}, sensor placement~\citep{krause08},
recommender systems~\citep{gartrell16}, neural network
compression~\citep{mariet16}, and minibatch selection~\citep{zhang17}. 

\citet{gillenwater14} study \dpp kernel learning via EM, while \citet{mariet15}
present a fixed-point method. \dpp kernel learning has leveraged
Kronecker~\citep{mariet16b} and low-rank~\citep{dupuy16,gartrell17} structures.
Learning guarantees using \dpp graph properties are studied
in~\citep{urschel17}. Aside from~\citet{tschiatschek16,djolonga16}, who learn a
Facility LocatIon Diversity (FLID) distribution (as well as more complex FLIC
and FLDC models) by contrasting it with a ``negative'' product distribution,
little attention has been given to using negative samples to learn richer
subset-selection models.

Nonetheless, leveraging negative information is a widely used in other
applications. In object detection, negative mining corrects for the skewed
simple-to-difficult negative distribution by training the model on its false
positives~\citep{sung96,cavenet14,shrivastava16}. In language modeling, Noise
Contrastive Estimation (NCE)~\citep{gutmann12}, which tasks the model with
distinguishing positive samples from generated negatives, was first applied
in~\citep{mnih12} and has been instrumental in Word2Vec~\citep{mikolov13}. Since
then, variants using adaptive noise~\citep{chen17} have been introduced. NCE is
also the method used by~\citet{tschiatschek16} for subset-selection.

An alternate approach to negative samples within submodular language models was
introduced as Contrastive Estimation in~\citep{smith05,smith05b}. Negative
sampling is also used in GANs~\citep{goodfellow14}, where a generator network
competes with a discriminative network which distinguishes between positives and
generated negatives.  An adversarial approach to Contrastive Estimation has been
recently introduced in~\citep{bose2018adversarial}, where ideas from GANs for
discrete data are used to implement an adversarial negative sampler that
augments a conventional negative sampler.


\section{Learning \dpps with negative samples}
\label{sec:learning-with-negatives}
Motivated by the similarities between \dpp learning and crucial structured prediction  problems in other ML fields, we introduce an optimization problem that leverages negative information. We refer to this problem as Contrastive Estimation (CE) due to its ties to a notion discussed in~\citep{smith05}.
 \subsection{Contrastive Estimation}
 In conventional \dpp learning, we seek to maximize determinantal volumes of sets drawn from the true distribution $\mu$ (that we wish to model), by 
 solving the following MLE problem, where samples in the training set $\Acal^+$ are assumed to be drawn i.i.d.:
 \begin{align}
   \label{eq:mle}
   \text{Find } \L \in \argmax_{\L \succeq 0} \phi_{\text{MLE}}(\L)  \triangleq& \tfrac{1}{|\Acal^+|}\sum_{A \in \Acal^+} \log \det (\L_A) - \log \det (\L+\I). 
 \end{align}
We augment problem~\eqref{eq:mle} to incorporate additional information from a \emph{negative} distribution $\nu$, which we wish to have the \dpp distribution move away from. The ensuing optimization problem is the main focus of our paper.
\begin{defn}[Contrastive Estimation] Given a training set of positive samples $\Acal^+$ on which $\phi_{\text{MLE}}$ is defined and a negative distribution $\nu$ over $2^{\Ycal}$, we call \emph{Contrastive Estimation} the  problem
\begin{equation}
  \label{eq:ce}
  \text{Find }\L \in \argmax_{\L \succeq 0} \phi_{\text{CE}}(\L) \triangleq\ \phi_{\text{MLE}}(\L) - \mathbb E_{A \sim \nu} [\log \Pcal_\L(A)], 
\end{equation}
where we write $\Pcal_{\L}(A) \equiv \det(\L_A)/\det(\L+\I)$.
\end{defn}

The expectation can be approximated by drawing a set of samples $\Acal^-$ from $\nu$:  $\phi_{\text{CE}}$ then becomes\footnote{With a slight abuse of notation, we continue writing $\phi_{\text{CE}}$ despite the sample approximation to $\mathbb{E}_{A\sim\nu}[\cdot]$.}
\begin{align}
  \label{eq:ce2}
  \phi_{\text{CE}}(\L) =& \tfrac{1}{|\Acal^+|} \sum_{A \in \Acal^+} \log \Pcal_\L(A) - \tfrac{1}{|\Acal^-|} \sum_{A \in \Acal^-} \log \Pcal_\L(A)
\end{align}
If $|\Acal^-|=0$, the CE objective \eqref{eq:ce} reduces to $\phi_{\text{MLE}}$.
Conversely,  $\phi_{\text{MLE}}$ can be viewed as a sample-based approximation of the value $\mathbb E_{A \sim \mu}[\log \P_\L(A)]$, where $\mu$ is the true distribution generating the samples in $\Acal^+$.
Interestingly, another reformulation of \eqref{eq:ce} suggests an even broader class of \dpp kernel learning: indeed, let $y_A$ be $\frac 1 {|\Acal^+|}$ (resp. $-\frac 1 {|\Acal^-|}$) for $A \in \Acal^+$ (resp. $\Acal^-$), and define  \[\Acal = \{(y_A, A) : A \in \Acal^+\} \cup \{(y_A, A) : A \in \Acal^-\},\]
where the $y_A$ should be viewed as belonging in $\{-1,1\}$ with an additional normalization coefficient. Then, we can rewrite equation~\eqref{eq:ce2} in the following form
\begin{equation}
  \label{eq:2}
  \phi_{\text{CE}}(\L)=\!\!\!\sum_{(y_A,A) \in \Acal} y_A \Big[\log \det \L_A - \det(\L+\I)\Big].
\end{equation}
Formulation~\eqref{eq:2} suggests the use of a broader scope of continuous labels $y_A$; we do not cover this variation in the present work, but note that \eqref{eq:2} permits the use of \emph{weighted} samples for learning.

\begin{rmk}
Compared to the traditional Noise Contrastive Estimation (NCE) approach, which requires full knowledge of the negative distribution, CE does not suffer any such limitation: we only require an estimate of $\mathbb E_\nu[\log \Pcal_\L(A)]$.
\end{rmk}

\begin{rmk}
  \label{rm:fun}
  Eq.~\eqref{eq:ce} can be made to go to $+\infty$ with pathological negative samples (i.e. $\Pcal_\L(A^-) = 0$); hence, choosing the negative distribution is a crucial concern for CE. In practice, we do not observe this pathological behavior (cf. Section~\ref{sec:experiments}).
\end{rmk}

\begin{rmk}
  CE is a non-convex optimization problem, and thus admits the same guarantees
  as \dpp MLE learning when learned using Stochastic Gradient Ascent with decreasing
  step sizes; however, the convergence rate will depend on the choice of $\nu$.
\end{rmk}

Indeed, to fully specify the CE problem one must first choose the negative distribution $\nu$, or equivalently, choose a procedure to generate  negative samples to obtain \eqref{eq:ce2}. We consider below two classes of distributions $\nu$ with considerably different ramifications: dynamic and static negatives; their analysis is the focus of the next two sections.

\subsection{Dynamic negatives}
\label{sec:negative-sampling}
In most applications leveraging negative information (e.g., negative mining, GANs), negative samples evolve over time based on the state of the learned model. We call any $\nu$ that depends on the state of the model a \emph{dynamic negative distribution}: at iteration $k$ of the learning procedure with kernel estimate $\L_k$, we use a $\nu$ parametrized by $\L_k$.

More specifically, we focus on the setting where negative samples themselves are generated by the current \dpp, with the goal of reducing overfitting. Given a positive sample $A^+$, we generate a negative $A^-$ by replacing $i \in A^+$ with $j$ that yields a high probability $\Pcal_{\L_k}(A^+ \wo \{i\} \cup \{j\})$ (Alg.~\ref{alg:dyn}). We generate the samples probabilistically rather than via mode maximization so that a sample $A^+$ can lead to different $A^-$ negatives when we generate more negatives than positives.
\begin{algorithm}[h]
  \caption{Generate dynamic negative}
  \label{alg:dyn}
  \begin{algorithmic}
    \STATE {\bfseries Input:} Positive sample $A^+$, current kernel $\L_k$
    \STATE Sample $i \in A^+$ prop. to its empirical probability in $\Acal^+$
    \STATE $A^- := A^+ \wo \{i\}$
    \STATE Sample $j$ w.p. proportional to $\Pcal_{\L_k}(A^- \cup \{j\})$
    \STATE $A^- \leftarrow A^- \cup \{j\}$
    \item[] \textbf{return} $A^-$
  \end{algorithmic}
\end{algorithm}


As $\nu$ evolves along with $\L_k$, the second term of  $\phi_{\text{CE}}$ acts as a moving target that must be continuously estimated during the learning procedure. For this reason, we choose to optimize $\phi_{\text{CE}}$ by a two-step procedure described in Alg.~\ref{alg:em}, similarly to an alternating maximization approach such as EM. 
\begin{algorithm}[h]
  \caption{Optimizing dynamic CE}
  \label{alg:em}
  \begin{algorithmic}
    \STATE {\bfseries Input:} Positive samples $\Acal^+$, initial kernel $\L_0$, maxIter.
    \STATE $k \leftarrow 1$
    \WHILE {$k++ < $ maxIter {\bfseries and} not converged}
    \STATE $\Acal^- \leftarrow \textsc{GenerateDynamicNegatives}(\L_k$, $\Acal^+$)
    \STATE $\L_{k+1} \leftarrow \textsc{OptimizeCE}(\L_k, \Acal^+, \Acal^-)$
    \ENDWHILE
    \item[] \textbf{return} $\L_k$
  \end{algorithmic}
\end{algorithm}

Note that this approach bears strong similarities with GANs, in which both the generator and discriminator evolve during training (dynamic negatives also appear in a discussion by~\citet{goodfellow14b} as a theoretical tool to analyze the difference between NCE and GANs).

Once the generated negative $A^-$ has been used in an iteration of the optimization of $\phi_{\text{CE}}$, it is less likely to be sampled again.\footnote{If $A^-$ happens to be a false negative (i.e. appears in $\mathcal A^+$), $A^-$ will be comparatively sampled more frequently as a positive, and so will contribute on average as a positive sample. Additional precautions such as the ones mentioned in~\citep{bose2018adversarial} can also be leveraged if necessary.} Crucially, such dynamic negatives also avoid the problem alluded to in Remark~\ref{rm:fun}, since by construction they have a non-zero probability under $\Pcal_{\L_k}$ at iteration $k$.

\subsection{Static negatives}
Conversely, we can simplify the optimization problem by considering a \emph{static} negative distribution: $\nu$ does not depend on the current kernel estimate.
A considerable theoretical advantage of static negatives lies in the simpler optimization problem: given a static negative distribution $\nu$, the optimization objective $\phi_{\text{CE}}$ does not evolve during training, and is amenable to a simple invocation of stochastic gradient descent~\citep{bottou1999}.

\begin{theorem}
  Let $\nu$ be a static distribution over $2^{\mathcal Y}$ and let $k > 0$ be such that $k \ge \max \{|S|: S \in \mathcal A^+ \cup \textup{\,supp}(\nu)\}$. Let $K$ be a bounded subspace of all  $|\Ycal| \times |\Ycal|$ positive semi-definite matrices of rank $k$. Projected stochastic gradient ascent applied to the CE objective with negative distribution $\nu$ and space $K$ with step sizes $\eta_i$ such that $\sum \eta_i = \infty$, $\sum \eta_i^2 < \infty$ will converge to a critical point.
\end{theorem}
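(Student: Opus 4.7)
The plan is to invoke a standard convergence result for projected stochastic gradient ascent on nonconvex, smooth objectives over compact sets (as in \citet{bottou1999}). To apply it, I would verify three prerequisites: (i) compactness and manageable geometry of the feasible set $K$, (ii) $C^1$-smoothness with Lipschitz gradients of $\phi_{\text{CE}}$ on $K$, and (iii) unbiasedness and bounded second moments of the stochastic gradient estimator.

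First, I would reparametrize rank-$k$ PSD matrices via $\L = \V\V^\top$ with $\V \in \Rbb^{|\Ycal|\times k}$, so that boundedness of $K$ translates to a norm constraint such as $\|\V\|_F \le M$, yielding a compact subset of Euclidean space for which the projection step is well-defined. A useful simplification is that the $\log\det(\L + \I)$ contributions in $\phi_{\text{CE}}$ cancel (the weights on positives and negatives each sum to $1$ with opposite signs), leaving
\[
  \phi_{\text{CE}}(\L) = \tfrac{1}{|\Acal^+|}\sum_{A \in \Acal^+}\log\det(\L_A) \;-\; \mathbb E_{A \sim \nu}[\log\det(\L_A)].
\]

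Second, I would use the rank bound $k \ge \max\{|S|: S\in\Acal^+\cup \textup{supp}(\nu)\}$ to argue that $\L_A$ can be nonsingular for every $\L \in K$, implicitly requiring $K$ to lie in the open region where $\det(\L_A) \ge \delta > 0$ for all relevant $A$. On such a region each $\L \mapsto \log\det(\L_A)$ is smooth with gradient given by $\L_A^{-1}$ padded with zeros outside $A$, and compactness of $K$ together with continuity yields uniform boundedness and Lipschitz constants on the gradient.

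Third, a natural stochastic gradient samples one $A \sim \textup{Unif}(\Acal^+)$ together with one $A' \sim \nu$ and returns $\nabla\log\det(\L_A) - \nabla\log\det(\L_{A'})$; unbiasedness is immediate, and the bounded second moment follows from the gradient bound on $K$. With (i)--(iii) verified, the classical convergence theorem of \citet{bottou1999} for projected SGD on nonconvex smooth objectives under the step-size conditions $\sum_i \eta_i = \infty$ and $\sum_i \eta_i^2 < \infty$ yields almost-sure convergence of the iterates to a critical point of $\phi_{\text{CE}}$.

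The main obstacle is twofold. The set of rank-$k$ PSD matrices is non-convex, which I would handle via the $\V$-parametrization above (trading the rank constraint for a pure norm constraint on a Euclidean space) or, alternatively, via manifold-projection arguments on the quotient manifold of rank-$k$ factorizations. The subtler issue is ensuring iterates never drift toward the boundary where $\det(\L_A) = 0$ and $\log\det(\L_A)\to -\infty$; this is not automatic from boundedness alone and must be encoded into the choice of $K$ (e.g., by intersecting with $\{\L : \det(\L_A) \ge \delta \ \forall A \in \Acal^+ \cup \textup{supp}(\nu)\}$), ultimately placing the regularity burden on the user's choice of admissible kernels rather than on the algorithm itself.
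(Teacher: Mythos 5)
The paper never actually proves this theorem: the only justification in the text is the remark that a static $\nu$ makes $\phi_{\text{CE}}$ ``amenable to a simple invocation of stochastic gradient descent'' with a citation to Bottou, together with the caveat immediately after the statement about spurious optima where $\Pcal_\L(A)=0$. So there is no paper proof to compare against, and your write-up is in effect the argument the paper omits. It is the natural one, your three checkpoints (compact feasible set, Lipschitz gradients on it, unbiased bounded-second-moment estimator) are the right ones, and the cancellation of the two $\log\det(\L+\I)$ terms is correct since the positive and negative weights each sum to one. The two caveats you raise are genuinely load-bearing rather than cosmetic: projection onto the set of rank-$k$ PSD matrices is not well defined (the set is non-convex and not closed), so the statement only makes sense after passing to the $\L=\V\V^\top$ parametrization with a norm ball, which is what the paper does algorithmically anyway; and boundedness of $K$ alone does not give bounded or Lipschitz gradients, because $\nabla\log\det(\L_A)$ blows up as $\det(\L_A)\to 0$, so $K$ must additionally be kept away from the singular locus for every $A\in\Acal^+\cup\mathrm{supp}(\nu)$ --- precisely the pathology the paper's Remark~\ref{rm:fun} flags and then dismisses empirically. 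Two small corrections to your framing: the dangerous degeneration is for \emph{negative} samples, where $-\log\det(\L_A)\to+\infty$ and the supremum of the objective escapes to $+\infty$ on the boundary of $K$ (for positives the objective tends to $-\infty$, which ascent avoids on its own); and a critical point in the $\V$ variable satisfies $\nabla\phi_{\text{CE}}(\V\V^\top)\,\V=0$, which is weaker than stationarity of $\L\mapsto\phi_{\text{CE}}(\L)$ on the rank-$k$ variety, so ``converges to a critical point'' should be understood as a critical point of the lifted objective. Neither issue invalidates your argument as a proof of the statement as written, provided the hypotheses on $K$ are read (or strengthened) as you propose.
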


Note, however, that such distributions may suffer from the fundamental theoretical issue in Rem.~\ref{rm:fun}, and hence careful attention must be paid to ensure that the learning algorithm does not converge to a spurious optimum that assigns a probability $\P_\L(A) = 0$ to $A \in \Acal^-$. In practice, we observed that the local nature of stochastic gradient ascent iterations was sufficient to avoid such behavior.

Let us now discuss two classical choices for fixed $\nu$.
\paragraph{Product negatives.} A common choice of negative distribution in other machine learning areas is the \emph{product distribution}, which is the standard ``noise'' distribution used in NCE. It is  defined by
\begin{equation}
  \label{eq:3}
  \nu(A) = \prod\nolimits_{i \in A} \hat p(i)\prod\nolimits_{i \not\in A} (1-\hat p(i))
\end{equation}
where $\hat p(i)$ is the empirical probability of $\{i\}$ in $\Acal^+$. Although~\citep{mikolov13} reports better results by raising the $\hat p$ to the power $\tfrac 34$, we did not observe any improvements when using exponentiated power distributions; for this reason, by \emph{product negatives}, we always indicate the baseline distribution \eqref{eq:3}. 

The product distribution is in practice a mismatch for \dpps, as it lacks the negative association property of \dpps which  enables them to model the repulsive interactions between similar items\footnote{\dpps belong to the family of \emph{Strongly Rayleigh} measures, which have been shown to verify a broad range of negatively associated properties; we refer the interested reader to the fascinating work~\citep{pemantle-00,borcea-09,borcea-09b,borcea-09c,borcea-09d,borcea-10}.}. 

\paragraph{Explicit negatives.} Alternatively, we may have prior knowledge of a class of subsets that our model should \emph{not} generate. For example, we might know that items $i$ and $j$ are negatively correlated and hence unlikely to co-occur. We may also learn via user feedback that some generated subsets are inaccurate. We refer to negatives obtained using such outside information as \emph{explicit negatives}.

A fundamental advantage of explicit negatives is that they allow us to incorporate prior knowledge and user feedback as part of the learning algorithm. The ability to incorporate such information, to our knowledge, is in itself a novel contribution to \dpp learning.

Although such knowledge may be costly and/or only available at rare intervals,  a form of continuous learning that would regularly update the state of our prior knowledge (and hence $\nu$) would bring the explicit negative distribution into the realm of dynamic distributions, as described by Alg.~\ref{alg:em}.

\section{Efficient learning and prediction}
\label{sec:efficient}
We now describe how the Contrastive Estimation problem for \dpps can be optimized efficiently.
In order to efficiently generate dynamic negatives, which rely on \dpp conditioning, we additionally generalize the dual transformation leveraged in~\citep{osogami18} to speed up basket-completion tasks with \dpps. This speed-up impacts the broader use of \dpps, outside of CE learning.

\subsection{Optimizing $\phi_{\text{CE}}$}
We propose to optimize the CE problem by exploiting a low-rank factorization of the kernel, writing $\L = \V\V^\top$, where $\V \in \mathbb R^{M \times K}$ and $K \le M$ is the rank of the kernel, which is fixed {\it a priori}.

This factorization ensures that the estimated kernel remains positive semi-definite, and  enables us to leverage the low-rank computations derived in~\citep{gartrell17} and refined in~\citep{osogami18}. Given the similar forms of the MLE and CE objectives, we use the traditional stochastic gradient ascent algorithm introduced by~\citep{gartrell17} to optimize~\eqref{eq:ce}. In the case of dynamic negatives, we re-generate $\Acal^-$ after each gradient step;  less frequent updates are also possible if the negative generation algorithm is very costly.

We furthermore augment $\phi_{\text{CE}}$ with a regularization term $R(\V)$, defined as
\[R(\V) = \alpha\sum\nolimits_{i=1}^M \frac 1 {\mu_i} \|\v_i\|_2^2,\]
where $\mu_i$  counts the occurrences of $i$ in the training set, $\v_i$ is the corresponding row vector of $\V$ and $\alpha > 0$ is a tunable hyperparameter. Note that this is the same regularization as introduced in~\citep{gartrell17}.
This regularization tempers the strength of $\|\v_i\|_2$, a term interpretable as to the popularity of item~$i$ \citep{kuleszaBook,gillenwater-thesis}, based on its \emph{empirical} popularity $\mu_i$. Experimentally, we observe that adding $R(\V)$ has a strong impact on the predictive quality of our model.

The reader may wonder if other approaches to \dpp learning are also applicable to the CE problem.
\begin{rmk}
  Gradient ascent algorithms require that the estimate $\L$ be projected onto the space of positive semi-definite matrices; however, doing so can lead to almost-diagonal kernels~\citep{gillenwater14} that cannot model negative interactions. Riemannian gradient ascent methods were considered, but deemed too computationally demanding by~\citep{mariet15}. Furthermore, the update rule for the fixed-point approach in~\citep{mariet15} does not admit a closed form solution for CE, rendering it impractical (App.~\ref{app:picard}).
\end{rmk}
The low-rank formulation allows us to apply CE (as well as NCE, as discussed in Section~\ref{sec:experiments}) to learn large datasets such as the Belgian retail supermarket dataset (described in Section~\ref{sec:experiments}) without prohibitive learning runtimes. We show below that by leveraging the idea described in~\citep{osogami18}, the low-rank formulation can also lead to additional speed ups during prediction.

\subsection{Efficient conditioning for predictions}
Dynamic negatives rely upon conditioning a \dpp on a chosen sample $A$ (see Alg.~\ref{alg:dyn}: $\Pcal_{\L_k}(A^- \cup \{j\})$ can be efficiently computed for all $j$ by a preprocessing step that conditions $\L_k$ on set $A^-$). For this reason, we now describe how low-rank \dpp conditioning can be significantly sped up.

In~\citep{gartrell17},  conditioning has a cost of $\mathcal O(K|\bar{A}|^2 + |A|^3)$, where $\bar{A} = \Ycal - A$.  Since
$|\Ycal|\, \gg \!|A|$ for many datasets, this represents a significant bottleneck
for conditioning and computing next-item predictions for a set.  We show here
that this complexity can be brought down significantly.
\begin{prop}
  Given $A \subseteq \{1, \ldots, M\}$ and a \dpp of rank $K$ parametrized by $\V$, where $\L = \V\V^\top$, we can derive the conditional marginal probabilities in the \dpp parametrization $\L^A$ in only $\mathcal O(K^3 + |A|^3 + K^2 |A|^2 + |\bar{A}| K^2)$ time.
\end{prop}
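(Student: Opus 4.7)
The plan is to exploit the low-rank factorization $\L = \V\V^\top$ together with the Schur-complement form of the conditional $\L$-ensemble, and then apply a dual trick analogous to the standard identity $\det(\L+\I) = \det(\V^\top\V + \I_K)$ in order to avoid ever materializing any $|\bar A|\times|\bar A|$ matrix.

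First, I would write $\V$ in block form with rows $\V_A \in \Rbb^{|A|\times K}$ and $\V_{\bar A} \in \Rbb^{|\bar A|\times K}$. The Schur complement form of the conditional $\L$-ensemble then gives
\[
\L^A \;=\; \V_{\bar A}\V_{\bar A}^\top - \V_{\bar A}\V_A^\top (\V_A\V_A^\top)^{-1} \V_A \V_{\bar A}^\top \;=\; \V_{\bar A} P \V_{\bar A}^\top,
\]
where $P = \I_K - \V_A^\top(\V_A\V_A^\top)^{-1}\V_A$ is the orthogonal projection onto $\ker \V_A$. Forming $P$ requires building $\V_A\V_A^\top$, inverting it, and sandwiching, costing $\mathcal O(K|A|^2 + |A|^3 + K^2|A|)$. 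Invertibility of $\V_A\V_A^\top$ is the standard non-degeneracy condition (it holds whenever $A$ has nonzero probability under the current \dpp, in particular whenever $|A|\le K$ and $\V_A$ has full row rank).

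Second, the basket-completion and dynamic negative sampling procedures require only the diagonal entries of the conditional marginal kernel $K^A = \I - (\L^A + \I_{\bar A})^{-1}$. Applying the push-through identity $(\I+AB)^{-1} = \I - A(\I+BA)^{-1}B$ with $A=\V_{\bar A}$, $B=P\V_{\bar A}^\top$ collapses the $|\bar A|\times|\bar A|$ inverse to a $K\times K$ one, giving
\[
K^A \;=\; \V_{\bar A}\,\bigl(\I_K + P\,\V_{\bar A}^\top\V_{\bar A}\bigr)^{-1} P\, \V_{\bar A}^\top.
\]
Using the precomputed dual matrix $\V^\top\V$, I would compute $\V_{\bar A}^\top\V_{\bar A} = \V^\top\V - \V_A^\top\V_A$ in $\mathcal O(K^2|A|)$, then build and invert the $K\times K$ middle matrix in $\mathcal O(K^3)$, and collapse everything into a single $K\times K$ matrix $M_\star$ in another $\mathcal O(K^3)$. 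Finally, each diagonal entry is the quadratic form $K^A_{jj} = \v_j^\top M_\star \v_j$, computed in $\mathcal O(K^2)$, for a total of $\mathcal O(|\bar A| K^2)$ over all $j \in \bar A$.

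Summing across steps produces the claimed bound, with the $K^2 |A|^2$ term absorbing the several intermediate cross terms (e.g.\ $K|A|^2$, $K^2|A|$). The main technical point to handle carefully is that $P$ is singular (rank $K-|A|$), which rules out a naive Woodbury application — the push-through identity is essential because it never inverts $P$ directly. A secondary care-point is avoiding redundant work when the caller also needs the full parametrization of $\L^A$: in that case one factors $P = \U\U^\top$ by a QR/SVD of $\V_A^\top$ and sets $\tilde\V = \V_{\bar A}\U$, again fitting within the stated complexity.
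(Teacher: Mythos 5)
Your proof is correct and achieves the stated complexity, but the second half takes a genuinely different route from the paper. Both arguments share the same first step: the conditional kernel factors through the rank-$K$ projection onto $\ker \V_A$, i.e.\ $\L^A = \V_{\bar A} \Z^A \V_{\bar A}^\top$ with $\Z^A = \I_K - \V_A^\top(\V_A\V_A^\top)^{-1}\V_A$ (your $P$ is exactly the paper's $\Z^A$, and your Schur-complement derivation of it is a clean, self-contained justification where the paper simply cites the dual-conditioning construction from Kulesza's thesis). Where you diverge is in extracting the marginals: the paper forms the $K\times K$ conditional dual kernel $\C^A$, eigendecomposes it in $\mathcal O(K^3)$, and reads off each marginal from the standard dual formula $P_i = \sum_n \tfrac{\lambda_n}{\lambda_n+1}\bigl(\tfrac{1}{\sqrt{\lambda_n}}\,\b_i^A\hat\v_n\bigr)^2$; you instead apply the push-through identity to collapse $\I - (\L^A + \I)^{-1}$ directly to a single $K\times K$ linear solve, obtaining each marginal as a quadratic form $\v_j^\top M_\star \v_j$. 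The two are computationally equivalent at the level of the stated bound, but your version avoids the eigendecomposition entirely (a plain factorization of $\I_K + P\V_{\bar A}^\top\V_{\bar A}$ suffices) and is arguably more numerically robust, since the paper's formula divides by $\sqrt{\lambda_n}$ and silently requires dropping the zero eigenvalues that $\C^A$ necessarily has (its rank is at most $K - |A|$). Your observation that the singularity of $P$ blocks a naive Woodbury application, making push-through the right tool, is exactly the point that makes this alternative work.
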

\begin{proof}
  Let $\V$ be the low-rank parametrization of the \dpp kernel ($\L = \V \V^\top$) and $A \subseteq \mathcal Y$. As in~\cite{gillenwater-thesis}, we first compute the dual kernel $\C = \B^\top\B$, where $\B =
\V^\top$.  We then compute
\begin{equation*}
 \C^A = (\B^A)^\top \B^A = \Z^A \C \Z^A,
\end{equation*}
with $\Z^A = \I - \B_A(\B_A^\top \B_A)^{-1} \B_A^\top$, and where $\C^A$ is the \dpp kernel conditioned on the event that all items in $A$ are observed, and $\B_A$ is the restriction of $\B$ to the rows and columns indexed by $A$.

Computing $\C^A$ costs $\mathcal O(K^3 + |A|^3 + K^2 |A|^2)$. Next,
following~\cite{kuleszaBook}, we eigendecompose $\C^A$ to compute the conditional
(marginal) probability $P_i$ of every possible item $i$ in $\bar{A}$:
\begin{equation*}
P_i = \sum\nolimits_{n=1}^K \tfrac{\lambda_n}{\lambda_n + 1}\left( \tfrac{1}{\sqrt{\lambda_n}} \b_i^A \hat{\v}_n  \right)^2
\end{equation*}
where $\b_i^A$ is column vector for item $i$ in $\B^A$ and $(\lambda_n$, $\hat \v_n)$ are an
eigenvalue/vector of $\C^A$.

The computational complexity for computing the eigendecomposition is $\mathcal O(K^3)$, and
computing $P_i$ for all items in $\bar{A}$ costs $\mathcal O(|\bar{A}| K^2)$. Therefore,
we have an overall computational complexity of $\mathcal O(K^3 + |A|^3 + K^2 |A|^2 +
|\bar{A}| K^2)$ for computing next-item conditionals/predictions for the
low-rank \dpp using the dual kernel, which is significantly superior to the
typical cost of $\mathcal O(K|\bar{A}|^2 + |A|^3)$.
\end{proof}
As in most cases $K \ll |\bar{A}|$, this represents a substantial improvement, allowing us condition
in time essentially linear in the size of the item catalog.

\section{Experiments}
\label{sec:experiments}
\begin{table*}[t]
  \centering
  \scalebox{.75}{
    \begin{tabular}{cccc}
      \multicolumn{4}{c}{(a) UK dataset}\\
      \toprule
      ~ & ~ & \multicolumn {2}{c}{Improvement over \lowrank} \\ \cline{3-4}\\[-.2cm]
      Metric & \lowrank & \expCE & \dynCE \\
      \midrule
      MPR & 80.07 & \bf 3.75 $\pm$ 0.16 & \bf 3.74 $\pm$ 0.16 \\
      AUC & 0.57297 & \bf 0.41465 $\pm$ 0.01334 & \bf 0.41467 $\pm$ 0.01339 \\
      \bottomrule
    \end{tabular}
  }
  \scalebox{.75}{
    \begin{tabular}{ccc}
      \multicolumn{3}{c}{(b) Belgian dataset}\\
      \toprule
      ~  & \multicolumn {2}{c}{Improvement over \lowrank} \\ \cline{2-3}\\[-.2cm]
      \lowrank & \expCE & \dynCE \\
      \midrule
      79.42 & \bf 9.58 $\pm$ 0.15 & \bf 9.64 $\pm$ 0.13 \\
      0.6162 & \bf 0.3705 $\pm$ 8.569e-5 & \bf 0.3702 $\pm$ 4.447e-5 \\
      \bottomrule
    \end{tabular}
  }
  \caption{Results over the UK and Belgian datasets. Both explicit and dynamic CE obtain statistically significant improvements in MPR and AUC metrics, confirming that CE learning enhances recommender value of the model and its ability to distinguish data drawn from the target distribution from fake samples. The impact on precisions@$k$ metrics is not reported as we did not observe statistically significant deviations from LowRank performance.}
\end{table*}

We run next-item prediction and AUC-based classification
experiments\footnote{All code is implemented in Julia and will be made
publicly available upon publication.} on two recommendation datasets for \dpp
evaluation: the UK retail dataset~\citep{lsbupr1492}, which after clipping all
subsets to a maximum size\footnote{This allows us to use a low-rank matrix
factorization for the \dpp that scales well in terms of train and prediction
time.} of 100, contains 4070 items and 20059 subsets,
and the Belgian Retail Supermarket
  dataset\footnote{\url{http://fimi.ua.ac.be/data/retail.pdf}}, which 
  contains 88,163 subsets, of a total of 16,470 unique
  items~\citep{brijs99,brijs03}.
We compare the following Contrastive Estimation approaches:
\begin{itemize}
  \item[--] \expCE: explicit negatives learned with CE. As to our knowledge there are no datasets with explicit negative information, we generate approximations of explicit negatives by removing one item from a positive sample and replacing it with the least likely item (Algorithm~\ref{alg:dyn-neg}).
  \item[--] \dynCE: dynamic negatives learned with CE.
\end{itemize}
As our work revolves around improving \dpp performance, we focus on the two following baselines, which are targeted to learning \dpp parametrizations from data:
\begin{itemize}
    \item[--] \prodNCE: Noise Contrastive Estimation using product negatives.
    \item[--] \lowrank: the standard low-rank \dpp stochastic gradient ascent algorithm from~\citep{gartrell17}.
\end{itemize}
NCE learns a model by contrasting $\Acal^+$ with negatives drawn from a ``noisy'' distribution $p_n$, training the model to distinguish between sets drawn from $\mu$ and sets drawn from $p_n$. NCE has gained popularity due to its ability to model distributions $\mu$ with untractable normalization coefficients, and has been shown to be a powerful technique to improve submodular recommendation models~\citep{tschiatschek16}. NCE learns by maximizing the following conditional log-likelihood:
\begin{align}
  \label{eq:nce}
  \phi_{\text{\tiny NCE}}(\L) = & {\sum_{A \in \Acal^+} \log P(A \in \Acal^+ \mid A)} + {\sum_{A \in \Acal^-} \log P(A \in \Acal^- \mid A)}. \end{align}
The key difference between NCE and CE lies in how negative information is used: whereas CE learns to assign a low probability to negative subsets, NCE's task is more indirect, learning to distinguish positive from negative examples. As a consequence of NCE's objective function~(Eq. \ref{eq:nce}), NCE requires knowledge of the distribution of negative samples, making it difficult to apply when explicit negative samples are available, but not the form of their distribution.

In our experiments, we learn the NCE objective with stochastic gradient ascent for our low-rank model, since $\nabla \log \Pr(A \in \Acal^* | A, {\V\V^\top})$ is given by
\begin{equation}
  \label{eq:gradcne}
  \Big(\epsilon^* - \Big({1 + \frac{|\Acal^-|}{|\Acal^+|} \frac{p_n(A)}{ \Pcal_{\V\V^\top}(A)}}\Big)^{-1} \Big) \nabla_\V \log \Pcal_{\V\V^\top}(A).
\end{equation}
where $\epsilon^*=1$ if $\Acal^* =  \Acal^+$ and 0 otherwise.

\begin{algorithm}[h]
  \caption{Approximate explicit negative generation}
  \label{alg:dyn-neg}
  \begin{algorithmic}
    \STATE {\bfseries input:} Positive sample $A^+$
    \STATE Sample $i\neq j \in A^+$ w.p. $p_i\propto \widehat P(\{i\})$
    \STATE Sample $k \not \in A^+$ w.p. $p_k \propto 1-\widehat P(\{i,k\})$.
    \item[] \textbf{return} $(A^+ \wo  \{j\}) \cup \{k\}$
  \end{algorithmic}
\end{algorithm}

This allows us to approximate true explicit negatives, as we use the empirical data to derive ``implausible'' sets. Note, however, that when using such negatives we have no guarantee that objective function will be well behaved, as opposed to the theoretically grounded dynamic negatives.

\subsection{Experimental setup}
The performance of all methods are compared using standard recommender system
metrics: Mean Percentile Rank (MPR). MPR is a recall-based
metric which evaluates the model's predictive power by measuring how well it
predicts the next item in a basket, and is a standard choice for recommender
systems~\citep{hu08,li10}.

Specifically, given a set $A$, let $p_{i,A}=\Pr(A\cup\{i\} \mid A)$. The percentile rank of an item $i$ given a set $A$ is defined as
\[\text{PR}_{i,A} = \frac {\sum_{i' \not\in A} \mathds 1(p_{i,A} \ge p_{i',A})} {|\Ycal \wo A|} \times 100\%\]
The MPR is then computed as\[\text{MPR}=\frac 1 {|\Tcal|} \sum_{A \in \mathcal T}\text{PR}_{i,A\wo \{i\}}\]
where $\Tcal$ is the set of test instances and $i$ is a randomly selected element in each set $A$. An MPR of 50 is equivalent to random selection; a MPR of 100 indicates that the model perfectly predicts the held out item. 

We also evaluate the discriminative power of each model using the AUC metric.
For this task, we generate a set of negative subsets uniformly at random.  For
each positive subset $A^+$ in the test set, we generate a negative subset $A^-$ of the
same length by drawing $|A^+|$ samples uniformly at random, while
ensuring that the same item is not drawn more than once for a subset.  We then
compute the AUC for the model on these positive and negative subsets, where the
score for each subset is the log-likelihood that the model assigns to the
subset. This task measures the ability of the model to discriminate between
positive subsets (ground-truth subsets) and randomly generated subsets.

In all experiments, 80\% of subsets are used for training; the remaining 20\%
served as test; convergence is reached when the relative change in the
validation log-likelihood is below a pre-determined threshold $\epsilon$, set
identically for all methods. All results are averaged over 5 learning trials.

\subsection{Amazon registries}
We conducted an experimental analysis on the largest 7 sub-datasets included in the Amazon Registry dataset, which has become a standard dataset for
\dpp modeling~\citep{gillenwater14,mariet15,gartrell17}.
Given the small size of these datasets (the largest has 100 items), these experiments serve only to provide insight into the general behavior of the baselines and CE methods as well as the influence of the hyperparameters on convergence.

Table~\ref{tab:runtimes} reports the average time to convergence for each method. As generating the dynamic negatives has a high complexity due to \dpp  conditioning, \dynCE is 2.7x slower than \expCE. \lowrank is the fastest method, as it does not need to process any negatives. \prodNCE is by far the most time-consuming.

\begin{table}[h]
  \centering
  \caption{\small Runtime to convergence (s) on the feeding Amazon registry ($\alpha=1$,  $|\mathcal A^-|/|\mathcal A^+|=0.5$,  $K=30$).}
  \label{tab:runtimes}
  \begin{center}
    \begin{sc}
      \begin{tabular}{lcccc}
        \toprule
        Method & \lowrank & \expCE & \dynCE & \prodNCE \\
        \midrule
        Runtime & 0.83 $\pm$ 0.54  & 2.69 $\pm$ 0.02  & 7.13 $\pm$ 0.28  &  27.59 $\pm$ 2.20 \\
        \bottomrule
      \end{tabular}
    \end{sc}
  \end{center}
\end{table}

We found that explicit and dynamic CE are not very sensitive to the $\alpha$ and $|\mathcal A^-|/|\mathcal A^+|$ hyperparameters. For this reason, in all further results, we set $\alpha=1$ and $|\mathcal A^-|/|\mathcal A^+|=.5$ in all further experiments. In previous work on low-rank \dpp learning~\citep{gartrell17}, $\alpha=1$ was found to be a reasonably optimal value, ensuring a fair comparison between all methods.

Further experiments reporting the MPR, AUC and various precisions for the Amazon registries are described in App.~\ref{app:abr}.

\subsection{UK and Belgian Retail Datasets}
\label{sec:belgian}
Following~\citep{gartrell16}, for both the UK and the Belgian dataset, we set the rank $K$ of the kernel to be the size of the largest subset in the dataset (K=100 for the UK dataset, K=76 for the Belgian dataset): this optimizes memory costs while still modeling all
ground-truth subsets. Based on our results on the smaller Amazon dataset, we fix
$|\Acal^-|/|\Acal^+|=0.5$ and $\alpha=1$.

Finally, corroborating our timing results on the Amazon registry, we saw that one iteration of \prodNCE required nearly 11 hours on the Belgian dataset (compared to 5 minutes for one iteration of CE). For this reason, we remove \prodNCE as a baseline from all remaining experiments, as it is not feasible in the general case.

Tables~\ref{tab:main} (a) and (b) summarize our results; the negative methods show
significant MPR improvement over \lowrank, with both \dynCE and \expCE
performing almost 10 points higher on the Belgian dataset, and 3 points higher on the UK dataset. This is a striking improvement, compounded
by small standard deviations confirming that these results are robust to matrix
initialization. 

We also see a dramatic improvement over \lowrank in AUC, with an improvement of
approximately 0.41 for the UK dataset and 0.37 for the Belgian dataset, across both \dynCE and \expCE methods.  Both \dynCE and \expCE perform
quite well, with an AUC score of approximately 0.9864 or higher for both models.
These results suggest that for larger datasets, CE can be effective at improving
the discriminative power of the \dpp.



\section{Conclusion and future work}
\vskip -5pt
We introduce the Contrastive Estimation (CE) optimization problem, which optimizes the difference of the traditional \dpp log-likelihood and the expectation of the \dpp model's log-likelihood under a \emph{negative} distribution $\nu$. This increases the \dpp's fit to the data while simultaneously incorporating inferred or explicit domain knowledge into the learning procedure.

CE lends itself to intuitively similar but theoretically different variants, depending on the choice of $\nu$: a static $\nu$ leads to significantly faster learning but allows spurious optima; conversely, allowing $\nu$ to evolve along with model parameters limits overfitting at the cost of a more complex optimization problem. Optimizing dynamic CE is in of itself a theoretical problem worthy of independent study.

Additionally, we show that low-rank \dpp conditioning complexity can be improved by a factor of $M$ by leveraging the dual representation of the low-rank  kernel. This not only improves prediction speed on a trained model, but allows for more efficient dynamic negative generation.

Experimentally, we show that CE with dynamic and explicit negatives provide comparable, significant improvements in the predictive performance of \dpps, as well as on the learned \dpp's ability to discriminate between real and randomly generated subsets.

Our analysis also raises both theoretical and practical questions: in particular, a key component of future work lies in better understanding how explicit domain knowledge can be incorporated into the generating logic for both dynamic and static negatives. Furthermore, the CE formulation in Eq.~\eqref{eq:2} suggests the possibility of using continuous labels for weighted samples within CE.


\paragraph{Acknowledgements.} This work was partially supported by a Criteo Faculty Research Award, and
NSF-IIS-1409802.

\bibliographystyle{icml2018}
\bibliography{bibliography}

\clearpage
\appendix
\section{Contrastive Estimation with the Picard iteration}
\label{app:picard}
Letting $\beta=|\Acal^+|-|\Acal^-| \ge 0$ and writing $\U_A$ as the $M \times |A|$ indicator matrix such that $\L_A = \U_A^\top\L \U_A$, we have
\begin{align*}
  \phi(\L) \propto& \underbrace{-\beta\log\det(\I+\X) + \sum_{A \in \Acal^+}  \log \det (\U_A^\top\X^{-1}\U_A)}_{f\text{ convex}}\\
                  &+ \underbrace{\beta\log \det(\X) - \sum_{A \in \Acal^-} \log \det (\U_A^\top\X^{-1}\U_A)}_{g\text{ concave}}
\end{align*}
where the convexity/concavity results follow immediately from~\citep[Lemma 2.3]{mariet15}. Then, the update rule $\nabla f(\L_{k+1}) = -\nabla g(\L_k)$ requires
\begin{align*}
  \beta& \L_{k+1} + \sum_{A \in \Acal^-}  \L_{k+1}\U_A(\U_A^\top\L_{k+1}\U_A)^{-1}\U_A^\top\L_{k+1} \\
  &\leftarrow  \beta (\I+\L_{k}^{-1})^{-1} + \sum_{A \in \Acal^+}  \L_k\U_A(\U_A^\top\L_k\U_A)^{-1}\U_A^\top\L_k
\end{align*}\vskip -1em
which cannot be evaluated due to the $\sum_{A \in \Acal^-}$ term.
\section{Amazon Baby registries experiments}
\label{app:abr}
\subsection{Amazon Baby Registries description}
\begin{table}[!h]
  \caption{Description of the Amazon Baby registries dataset.}
  \label{tab:abr}
  \vskip 0.15in
  \begin{center}
    \begin{small}
      \begin{sc}
        \begin{tabular}{lccc}
          \toprule
          Registry & $M$ & train size & test size \\
          \midrule
          health    & 62  & 5278  & 1320 \\
          bath      & 100 & 5510  & 1377 \\
          apparel   & 100 & 6482  & 1620 \\
          bedding   & 100 & 7119  & 1780 \\
          diaper    & 100 & 8403  & 2101 \\
          gear      & 100 & 7089  & 1772 \\
          feeding   & 100 & 10,090 & 2522 \\
          \bottomrule
        \end{tabular}
      \end{sc}
    \end{small}
  \end{center}
\end{table}

\subsection{Experimental results}
In Tab.~\ref{tab:main}(a), we compare the performance of the various algorithms with rank $K=30$. The regularization strength $\alpha$ is set to its optimal value for the \lowrank algorithm, and $|\Acal^-|/|\Acal^+|=1/2$. This allows us to compare the LR algorithm to its ``augmented'' negative versions without hyper-parameter tuning.  As \prodCE performs much worse than \lowrank, it is not included in  further experiments.

We evaluate the precision at $k$ as
\[p@k= \frac 1{|\mathcal T|} \sum_{A \in \mathcal T} \frac 1 {|A|} \sum_{i \in A} \mathds 1\big[\text{rank}(i \mid A \wo \{i\}) \le k\big].\]

\begin{table}[h]
  \caption{\small MPR, p@$k$, and AUC values for \lowrank, and baseline improvement over \lowrank for other methods.  Positive values indicate the algorithm performs better than \lowrank, and bold values indicate improvement over \lowrank that lies outside the standard deviation. Experiments were run 5 times, with  $|\mathcal A^+|/|\mathcal A^-|=\frac 12$; $\alpha$ is set to its optimal \lowrank value.} 
  \label{tab:main}
  \begin{center}
    \begin{small}
      \scalebox{0.85}{
        \begin{tabular}{ccccc}
        \toprule
        ~ & ~ & \multicolumn {3}{c}{Improvement over \lowrank} \\ \cline{3-5}\\[-.2cm]
          Metric & \lowrank & \dynCE & \expCE & \prodNCE \\
\midrule
          MPR & 70.50 & \bf 0.92 $\pm$ 0.56 & \bf 0.68 $\pm$ 0.62 & \bf 0.86 $\pm$ 0.55 \\
          p@1 & 9.96 &  0.67 $\pm$ 0.75 &  0.58 $\pm$ 0.76 &  0.20 $\pm$ 1.75 \\
          p@5 & 25.36 & \bf 1.04 $\pm$ 0.82 & \bf 0.78 $\pm$ 0.67 &  0.67 $\pm$ 1.09 \\
          p@10 & 36.50 & \bf 1.39 $\pm$ 0.85 & \bf 1.13 $\pm$ 0.79 &  0.97 $\pm$ 1.18 \\
          p@20 & 51.22 & \bf 1.38 $\pm$ 0.97 & \bf 1.28 $\pm$ 1.11 & \bf 1.35 $\pm$ 1.20 \\
          AUC & 0.630 & \bf 0.027 $\pm$ 0.017 & \bf 0.026 $\pm$ 0.016 &  0.009 $\pm$ 0.017 \\
        \bottomrule
        \end{tabular}
      }
    \end{small}
  \end{center}
\end{table}

Compared to traditional SGA methods, algorithms that use inferred negatives perform (\prodCE excepted) better across all metrics and datasets. 
\dynCE and \expCE provide consistent improvements compared to the other methods, whereas \prodNCE shows a higher variance and slightly worse performance.
Improvements observed using \dynCE and \expCE are larger than the loss in performance due to going from full-rank to low-rank kernels reported in~\citep{gartrell17}.



Finally, we also compared all methods when tuning both the regularization $\alpha$ and the negative to positive ratio ${|\Acal^-|}/{|\Acal^+|}$, but did not see any significant improvements. As this suggests there is no need to do additional hyper-parameter tuning when using CE, we fix $\frac{|\Acal^-|}{|\Acal^+|} = \frac 12$ for all experiments.

\end{document}